\def\eqref#1{equation~\ref{#1}}
\def\1{\bm{1}}
\def\vv{{\bm{v}}}
\def\vx{{\bm{x}}}
\DeclareMathAlphabet{\mathsfit}{\encodingdefault}{\sfdefault}{m}{sl}
\SetMathAlphabet{\mathsfit}{bold}{\encodingdefault}{\sfdefault}{bx}{n}
\newtheorem{definition}{\textbf{Definition}}
\newtheorem{theorem}{\textbf{Theorem}}
\newtheorem{proposition}{\textbf{Proposition}}
\newtheorem{remark}{\textbf{Remark}}
\DeclareMathOperator*{\argmin}{arg\,min}
\theoremstyle{definition}
\newcommand{\mcal}{\mathcal}
\title{How Does Data Augmentation Affect Privacy in Machine Learning?}
\author {

        Da Yu\thanks{The work was done when this author was an intern at Microsoft Research Asia.}\textsuperscript{\rm 1},
        Huishuai Zhang\textsuperscript{\rm 2},
        Wei Chen\textsuperscript{\rm 2},
        Jian Yin\textsuperscript{\rm 1},
        Tie-Yan Liu\textsuperscript{\rm 2}\\
}
\begin{document}

\maketitle

\begin{abstract}

It is observed in the literature that data augmentation can significantly mitigate membership inference (MI) attack. However, in this work, we challenge this observation by proposing new MI attacks to utilize the information of augmented data.  MI attack  is widely used to measure the model's information leakage of the training set. We establish the optimal membership inference when the model is trained with augmented data, which inspires us to formulate the MI attack  as a set classification problem, i.e., classifying a set of augmented instances instead of a single data point, and design input permutation invariant features. Empirically, we demonstrate that the proposed approach universally outperforms original methods when the model is trained with data augmentation. Even further, we show that the proposed approach  can achieve higher MI attack success rates on models trained with some data augmentation than the existing methods on  models trained without data augmentation. Notably, we achieve 70.1\% MI attack success rate on CIFAR10 against a wide residual network while previous best approach only attains 61.9\%. This suggests the privacy risk of models trained with data augmentation could be largely underestimated.

\end{abstract}

\section{Introduction}

The training process of machine learning model often needs access to private data, e.g., applications in financial and medical fields. Recent works have shown that the trained model  may leak the information of its private training set \citep{fredrikson2015model, wu2016methodology, shokri2017membership, hitaj2017deep}. As the machine learning models are ubiquitously deployed in real-world applications, it is important to quantitatively analyze the information leakage of their training sets.  One fundamental approach reflecting the privacy leakage of a model about its training set is the \emph{membership inference}  \cite{shokri2017membership,yeom2018privacy,salem2018ml,nasr2018machine,long2018understanding,jia2019memguard,song2019privacy, chen2020machine}, i.e., an adversary, who has access to a target model, determines whether a data point is used to train the target model (being a member) or not (not being a member). Membership inference (MI) attack is formulated as a binary classification task. A widely adopted measure for the performance of an MI attack algorithm in literature is the MI success rate over a balanced set that contains half training samples and half test samples. A randomly guessing attack will have success rate of $50\%$ and hence a good MI algorithm should have success rate above $50\%$.

\begin{figure} 
    \centering
  \includegraphics[width=0.75\linewidth]{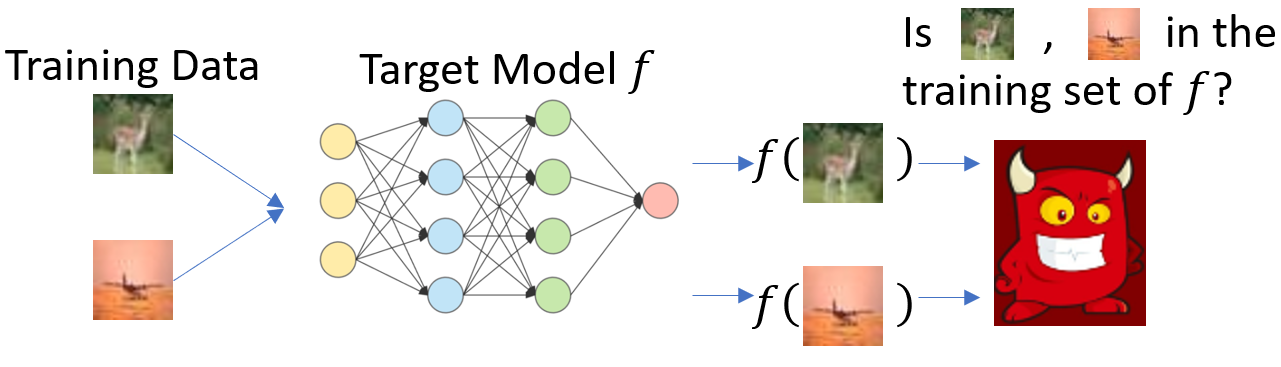}
  \caption{Overview of black-box membership inference in machine learning. The adversary has access to  target model's outputs of given samples. The adversary then infers whether the sample is in the target model's training set or not. Higher inference success rate indicates more severe privacy leakage.}
  \label{fig:MI}
\end{figure}

 It is widely believed that the capability of membership inference is largely attributed to the generalization gap \cite{shokri2017membership,yeom2018privacy,li2020membership}. The larger performance difference of the target model on the training set and on the test set, the easier to determine the membership of a sample with respect to the target model. 
\emph{Data augmentation} is known to be an effective approach to produce well-generalized models.  Indeed, existing MI algorithms obtain significantly lower MI success rate against models trained with data augmentation than those trained without data augmentation \cite{sablayrolles2019white}. It seems that the privacy risk  is largely relieved when data augmentation is used.

 We challenge this belief by elaborately showing how data augmentation affects the MI attack. 
 We first establish the optimal membership inference when the model is trained with data augmentation from the Bayesian perspective.  The optimal membership inference indicates that we should use the set of augmented instances of a given sample rather than a single sample to decide the membership. This matches the intuition because the model is trained to fit the augmented data points instead of a single data point.   We also explore the connection between optimal membership inference and group differential privacy, and obtain an upper bound of the success rate of MI attack. 

In this paper, we focus on the \emph{black-box} membership inference \cite{shokri2017membership,yeom2018privacy,salem2018ml,song2019privacy,sablayrolles2019white}. We give an illustration of black-box MI in Figure~\ref{fig:MI}. The black-box setting naturally arises  in the \emph{machine learning as a service (MLaaS)} system. In MLaaS, a service provider trains a ML model on private crowd-sourced data and releases the model to users through prediction API.  Under the black-box setting, one has access to the model's output of a given sample. Typical outputs are the loss value \cite{yeom2018privacy,sablayrolles2019white} and the predicted logits \cite{shokri2017membership,salem2018ml}.  We use the  loss value of a given sample as it is shown to be better than the logits \cite{sablayrolles2019white}. 

Motivated by the optimal membership inference, we formulate the membership inference as a set classification problem where the set consists of loss values of the augmented instances of a sample evaluated on the target model. We design two new algorithms for the set classification problem. The first algorithm  uses threshold on the average of the loss values of augmented instances, which is inspired by the expression of the optimal membership inference. The second algorithm uses neural network as a membership classifier. For the second algorithm, we show it is important to design features that are invariant to  the permutation on loss values. Extensive experiments demonstrate that our algorithms significantly improve the success rate over existing membership inference algorithms.  We even find that the proposed approaches on models trained with some data augmentation  achieve higher MI attack success rate than the existing methods on the model trained without data augmentation. Notably, our approaches achieve $>70\%$ MI attack success rate against a wide residual network, whose test accuracy on CIFAR10 is more than $95\%$.

Our contributions can be summarized as follows.  First, we establish the optimal membership inference when the model is trained with data augmentation. Second, we formulate the membership inference as a set classification problem and propose two new approaches to conduct membership inference, which achieve significant improvement over existing methods. This suggests  that \emph{the privacy risk of models trained with data augmentation could be largely underestimated}. To the best of our knowledge, this is the first work to systematically study the effect of data augmentation on membership inference and reveal non-trivial theoretical and empirical findings.

\subsection{Related Work}

Recent works have explored the relation between generalization gap and the success rate of membership inference. \citet{shokri2017membership, sablayrolles2019white} empirically observe that better generalization leads to worse inference success rate.  \citet{yeom2018privacy} show the success rates of some simple attacks are directly related to the model's generalization gap. For a given model, \citet{li2020membership} empirically verify the success rate of MI attack is upper bounded by generalization gap. However, whether the target model is trained with data augmentation, the analysis and  algorithms of previous work only use single instance to decide membership. Our work fills this gap by formulating and analyzing membership inference when data augmentation is applied. 

\citet{song2019privacy} show \emph{adversarially robust} \cite{madry2018towards} models are more vulnerable to MI attack. They identify one major reason of this phenomenon is the increased generalization gap caused by adversarial training. They also design empirical attack algorithm which leverages the adversarially perturbed image (this process needs white-box access to the target model). In this paper, we choose perturbations following the common practice of data augmentation, which can reduce the generalization gap and do not need white-box access to the target model. 

Differential privacy \citep{dwork2006calibrating, algofound} controls how a single sample could change the parameter distribution in the worst case. How data augmentation affects the DP guarantee helps us to understand how the data augmentation affects membership inference.  In Section~\ref{sec:dp}, we give a discussion on the relation between data augmentation, differential privacy, and membership inference.

\section{Preliminary}
\label{sec:pre}
We assume that a dataset $D$  consists of samples of the form $(\vx,y)\in \mcal{X}\times\mcal{Y}$, where $\vx$ is the feature and $y$ is the label. A model $f$ is a mapping from feature space to label, i.e., $f: \mcal{X}\rightarrow \mcal{Y}$. We assume that the model is parameterized by $\theta\in\mathbb{R}^{p}$. We further define a loss function $\ell(f(\vx),y)$ which measures the performance of the model on a data point, e.g., the cross-entropy loss for classification task. We may also written the loss function as $\ell(\theta, d)$ for data point $d=(\vx,y)$ in this paper. The learning process is conducted by minimizing the \emph{empirical loss}: $\sum_{d\in D}\ell(\theta, d)$.


The data are often divided into training set $D_{train}$ and test set $D_{test}$ to properly evaluate the model performance on unseen samples. The generalization gap $G$ represents the difference of the model performance between the training set and the test set,
\begin{equation}
\begin{aligned}
\label{eq:generalization}
G=\mathbb{E}_{d\sim D_{test}}[\ell(\theta, d)]-\mathbb{E}_{d\sim D_{train}}[\ell(\theta,d)].
\end{aligned}
\end{equation}


\subsection{Data Augmentation}
Data augmentation is well  known as a good way to improve generalization. It transforms each sample into  similar variants and uses the transformed variants as the training samples.  We use $\mathcal{T}$ to denote the set of all possible transformations. For a given data point $d$, each transformation $t\in \mcal{T}$ generates one augmented instance $t(d)=(\tilde \vx,y)$. For example, if $\vx$ is a natural image, the transformation could be rotation by a specific degree or flip over the horizontal direction.  The set $\mcal{T}$ then contains the transformations with all possible rotation degrees and all directional flips. The size of $\mcal{T}$ may be infinite and we usually only use a subset in practice. Let $T\subset \mcal{T}$ be a subset of transformations. The cardinality of $|T|$ controls the strength of the data augmentation. We use $T(d)=\{t(d);t\in T\}$ and $\ell_{T}(\theta, d)=\{\ell(\theta,\tilde d);\tilde d\in T(d)\}$ to denote the set of augmented instances and corresponding loss values. With data augmentation,  the learning objective is to fit the augmented instances
\begin{equation}
\begin{aligned}
\label{eq:augmented_obj}
\theta=\argmin_{\theta}\sum_{d\in D}\;\sum_{ \tilde d \in T(d)} \ell(\theta,\tilde d).
\end{aligned}
\end{equation}


\subsection{Membership Inference}

Membership inference is a widely used tool to quantitatively analyze the information leakage of a trained model.  Suppose the whole dataset consists of $n$ i.i.d. samples $d_{i},\ldots,d_{n}$ from a data distribution, from which we choose a subset as the training set. We decide membership using $n$ i.i.d. Bernoulli samples $\{m_{1},\ldots,m_{n}\}$ with a positive probability $\mathbb{P}(m_{i}=1)=q$. Sample $d_i$ is used to train the model if $m_{i}=1$ and is not used if  $m_{i}=0$. Given the learned parameters $\theta$ and $d_{i}$, membership inference is to infer $m_{i}$, which amounts to computing $\mathbb{P}(m_{i}=1|\theta,d_{i})$.


That is to say, membership inference aims to find the posterior distribution of $m_{i}$ for given $\theta$ and $d_{i}$. Specifically, \citet{sablayrolles2019white} shows that it is sufficient to use the loss of the target model to determine the membership $m_i$ under some assumption on the the posterior distribution of $\theta$. They predict $m_{i}=1$ if $\ell(\theta,d_{i})$ is smaller than a threshold $\tau$, i.e.
\begin{equation}
\label{eq:m_loss}
\begin{aligned}
M_{loss}(\theta,d_{i})=1\;\;\;\; \text{if} \;\;\;\; \ell(\theta,d_{i})<\tau.
\end{aligned}
\end{equation}
This membership inference is well formulated for the model trained with original samples. However, it is not clear how to conduct membership inference and what is the optimal algorithm  when data augmentation is used in the training process\footnote{\citet{sablayrolles2019white} directly applies the  algorithm (Equation~\ref{eq:m_loss}) for the case with data augmentation.}. We  analyze these questions in next sections.



\section{Optimal Membership Inference with Augmented Data}
\label{sec:optimal_mi}


When data augmentation is applied, the process  \[\{d_{i}\} \rightarrow \{T(d_{i})\} \rightarrow \{\theta, m_i\}\] forms a \emph{Markov chain}, which is due to the described learning process. That is to say, given $T(d_{i})$, $d_i$ is independent from $\{\theta, m_i\}$. Hence we have
\[H(m_i| \theta, T(d_i)) = H(m_i|\theta, T(d_i), d_i) \ge H(m_i |\theta, d_i),\]
where  $H(\cdot| \cdot)$ is the conditional entropy \cite{ghahramani2006information}, the first equality is due to the Markov chain and the second inequality is due to the property of conditional entropy.

This indicates that we could get less uncertainty of $m_i$ based on $\{\theta, T(d_i)\}$ than based on $\{\theta, d_i\}$. Based on this observation, we give the following definition.

\begin{definition} (Membership inference with augmented data)
\label{def:definition_mi_aug}
For given parameters $\theta$, data point $d_{i}$ and transformation set $T$, membership inference  computes
\begin{equation}
\begin{aligned}
\mathbb{P}(m_{i}=1|\theta,T(d_{i})).
\end{aligned}
\end{equation}
\end{definition}


For the membership inference with augmented data given by Definition~\ref{def:definition_mi_aug}, we establish an equivalent formula in the Bayesian sense, which sets up the optimal limit that our algorithm can achieve. Without  loss of generality, suppose we want to infer $m_{1}$. Let $\mathcal{K}=\{m_{2},\ldots,m_{n},T(d_{2}),\ldots,T(d_{n})\}$  be the status of remaining data points. Theorem~\ref{lma:optimal_mi} provides the Bayesian optimal membership inference rate.




\begin{theorem}
\label{lma:optimal_mi}
The optimal membership inference for given $\theta$ and $T(d_{1})$ is $\mathbb{P}(m_{1}=1|\theta,T(d_{1}))=$
\[\mathbb{E}_{\mathcal{K}}\left[\sigma\left(\log\left(\frac{\mathbb{P}(\theta|m_{1}=1,T(d_{1}),\mcal{K})}{\mathbb{P}(\theta|m_{1}=0,T(d_{1}),\mcal{K})}\right)+\log\left(\frac{q}{1-q}\right)\right)\right],\]
where $\sigma(x):=(1+e^{-x})^{-1}$ is the sigmoid function and $q :=\mathbb{P}(m_{1}=1)$ is a constant.
\end{theorem}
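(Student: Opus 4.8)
The plan is to reduce the claim to Bayes' rule applied conditionally on $\mcal{K}$, followed by a single algebraic rewriting of a two-point posterior as a sigmoid. First I would use the tower property of conditional expectation to write
\[\mathbb{P}(m_{1}=1|\theta,T(d_{1})) = \mathbb{E}_{\mcal{K}|\theta,T(d_{1})}\big[\,\mathbb{P}(m_{1}=1|\theta,T(d_{1}),\mcal{K})\,\big],\]
so it suffices to identify the inner posterior; here $\mathbb{E}_{\mcal{K}}$ in the statement is read as this conditional expectation over the status $\mcal{K}$ of the remaining points given $(\theta,T(d_{1}))$. The Markov chain $\{d_i\}\to\{T(d_i)\}\to\{\theta,m_i\}$ established just before the theorem is exactly what lets us work with $T(d_{1})$ throughout and never need $d_{1}$: conditioning on the augmented set loses no information about $(\theta,m_{1})$.

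Next I would expand the inner term with Bayes' rule. Writing $p_{1}:=\mathbb{P}(\theta|m_{1}=1,T(d_{1}),\mcal{K})$ and $p_{0}:=\mathbb{P}(\theta|m_{1}=0,T(d_{1}),\mcal{K})$ for the corresponding conditional densities (with respect to a common dominating measure on $\mathbb{R}^{p}$), Bayes' rule gives
\[\mathbb{P}(m_{1}=1|\theta,T(d_{1}),\mcal{K}) = \frac{p_{1}\,\mathbb{P}(m_{1}=1|T(d_{1}),\mcal{K})}{\mathbb{P}(\theta|T(d_{1}),\mcal{K})}.\]
Because the membership bits $m_{1},\ldots,m_{n}$ are i.i.d.\ Bernoulli$(q)$ and independent of the data (hence of $T(d_{1})$ and of $\mcal{K}$), the prior factor in the numerator is just $q$; and marginalizing $m_{1}$ out of the denominator gives $\mathbb{P}(\theta|T(d_{1}),\mcal{K}) = q\,p_{1}+(1-q)\,p_{0}$. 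Hence the inner posterior equals $q p_{1}/\big(q p_{1}+(1-q)p_{0}\big)$.

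The last step is the elementary identity, with $\sigma(x)=(1+e^{-x})^{-1}$,
\[\frac{q p_{1}}{q p_{1}+(1-q)p_{0}} = \Big(1+\tfrac{1-q}{q}\cdot\tfrac{p_{0}}{p_{1}}\Big)^{-1} = \sigma\!\Big(\log\tfrac{p_{1}}{p_{0}}+\log\tfrac{q}{1-q}\Big),\]
which follows by dividing numerator and denominator by $q p_{1}$ and recognizing the remaining factor as $e^{-\log(p_1/p_0)-\log(q/(1-q))}$. Substituting this back under $\mathbb{E}_{\mcal{K}|\theta,T(d_{1})}$ reproduces the claimed formula.

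The step I expect to be the real crux is the bookkeeping around the expectation over $\mcal{K}$, not any single calculation. One must keep $\mathbb{E}_{\mcal{K}}$ as the \emph{posterior} expectation given $(\theta,T(d_{1}))$: the expectation of the ratio $q p_{1}/\big(q p_{1}+(1-q)p_{0}\big)$ is not the ratio of the expectations, so the identity is exact only in this conditional form, and any attempt to replace the posterior of $\mcal{K}$ by its prior (which is what one can actually sample from in practice) requires an extra modeling assumption on the posterior of $\theta$ in the spirit of \citet{sablayrolles2019white}, which I would state explicitly. A secondary, purely technical concern is guaranteeing that $p_{0},p_{1}$ are well-defined and mutually absolutely continuous so that $\log(p_{1}/p_{0})$ makes sense; this is covered by a mild regularity/dominating-measure assumption on the distribution of the learning algorithm's output.
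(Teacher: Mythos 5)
Your proof is correct and follows essentially the same route as the paper's: the law of total expectation over $\mcal{K}$, Bayes' rule on the inner posterior with $\mathbb{P}(\theta|T(d_{1}),\mcal{K})=q\,p_{1}+(1-q)\,p_{0}$, and the algebraic rewriting of the two-point posterior as a sigmoid of the log-likelihood ratio plus the log prior odds. Your added care about reading $\mathbb{E}_{\mcal{K}}$ as the conditional expectation given $(\theta,T(d_{1}))$ and about the independence of the membership bits is a point the paper leaves implicit, but it does not change the argument.
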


\begin{proof}
Apply the law of total expectation and Bayes' theorem, we have
\begin{equation}
\begin{aligned}
\label{eq:pro_lma1_0}
&\mathbb{P}(m_{1}=1|\theta,T(d_{1}))=\mathbb{E}_{\mathcal{K}}[\mathbb{P}(m_{1}=1|\theta,T(d_{1}),\mathcal{K})]\\
&=\mathbb{E}_{\mathcal{K}}\left[\frac{\mathbb{P}(\theta|m_{1}=1,T(d_{1}),\mcal{K})\mathbb{P}(m_{1}=1)}{\mathbb{P}(\theta|T(d_{1}),\mcal{K})}\right].
\end{aligned}
\end{equation}
Substitute  $q:=\mathbb{P}(m_{i}=1)$ and let
{\small
\begin{equation}
\begin{aligned}
\label{eq:pro_lma1_defineab}
\alpha :=\mathbb{P}(\theta|m_{1}=1,T(d_{1}),\mcal{K}),\;\beta :=\mathbb{P}(\theta|m_{1}=0,T(d_{1}),\mcal{K}).
\end{aligned}
\end{equation}
}
Notice that $\mathbb{P}(\theta|T(d_{1}),\mcal{K})=q\alpha+(1-q)\beta$. Then rearranging Eq~(\ref{eq:pro_lma1_0}) gives
\begin{equation}
\begin{aligned}
\label{eq:pro_lma1_1}
\mathbb{P}(m_{1}=1|\theta,T(d_{1}))=\mathbb{E}_{\mathcal{K}}\left[\left(1+(\frac{1-q}{q})\frac{\beta}{\alpha}\right)^{-1}\right],
\end{aligned}
\end{equation}
which concludes the proof.
\end{proof}

We note that the expression in Theorem~\ref{lma:optimal_mi} measures how a single data point affects the parameter posterior in expectation. This is connected with the \emph{differential privacy} \cite{dwork2006calibrating, dwork2006our}, which measures how a single data point affects the parameter posterior in the worst case.   We give a discussion on the relation between data augmentation, differential privacy, and membership inference in Section~\ref{sec:dp}.

\section{Membership Inference with Augmented Data Under a Posterior Assumption}
\label{sec:mi_bayesian}

 \begin{figure} 
    \centering
  \includegraphics[width=0.5\linewidth]{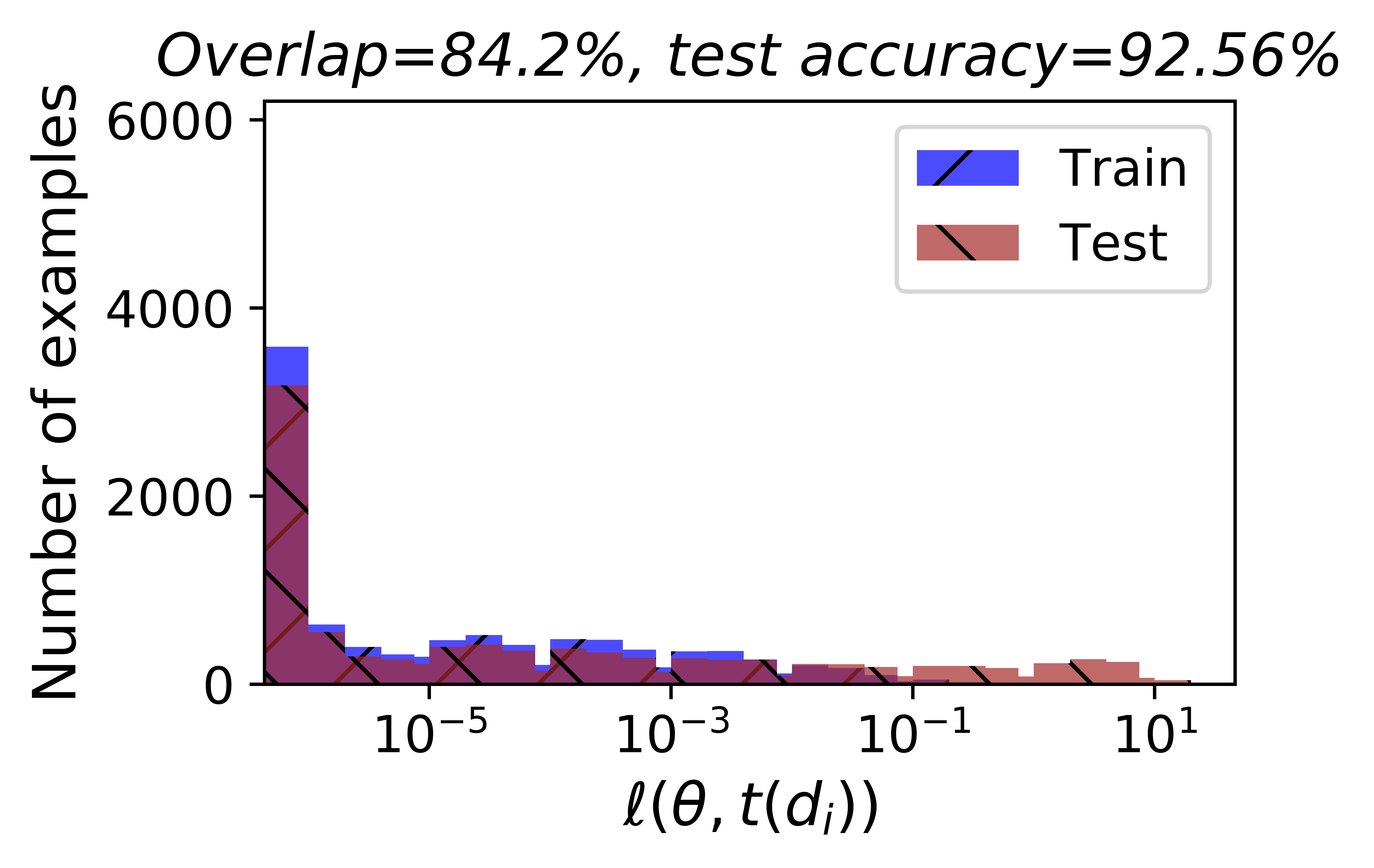}
  \caption{Distribution of single loss values on CIFAR10 dataset. The model is ResNet110 trained with $|T|=10$. The plot uses 10000 examples from training set and 10000 examples from test set. The dark region is the overlap area between training and test distributions. The membership of a value inside overlap region is hard to decide.  }
  \label{fig:loss_hist}
\end{figure}

In this section we first show the optimal membership inference explicitly depends on the loss values of augmented examples when $\theta$ follows a \emph{posterior} distribution. Then we give a membership inference algorithm based on our theory.

\subsection{Optimal Membership Inference Under a Posterior Assumption}

In order to further explicate the optimal membership inference (Theorem \ref{lma:optimal_mi}), we need knowledge on the probability density function of $\theta$. Following the wisdom of energy based model \cite{lecun2006tutorial,du2019implicit}, we assume the posterior distribution has the form,

\begin{equation}
\begin{aligned}
\label{eq:assumption}
p(\theta|m_{1},T(d_{1}),\mcal{K})\propto \exp\left(-\frac{1}{\gamma}L(\theta)\right),
\end{aligned}
\end{equation}
where  $L(\theta)=\sum_{i=1}^{n}m_{i}\sum \ell_{T}(\theta,d_{i})\geq 0$ is the objective to be optimized and $\gamma$ is the temperature parameter. We note that Eq~(\ref{eq:assumption}) meets the intuition that the parameters with lower loss on training set have larger chance to appear after training. Let $p_{\mcal{K}}(\theta)=\frac{\exp(-\frac{1}{\gamma}\sum_{i=2}^{n}m_{i}\sum \ell_{T}(\theta,d_{i}))}{\int_{z}\exp(-\frac{1}{\gamma}\sum_{i=2}^{n}m_{i}\sum \ell_{T}(z,d_{i}))dz}$  be the PDF of $\theta$ given $\mcal{K}$. The denominator is a constant keeping $\int_{z}p_{\mcal{K}}(z)dz=1$. Theorem~\ref{thm:opt_bayesian} present the optimal algorithm under this assumption.


\begin{theorem}
\label{thm:opt_bayesian}
Given parameters $\theta$ and $T(d_{1})$, the optimal membership inference is 
\[\mathbb{P}\left(m_{1}=1|\theta,T(d_{1})\right)=\mathbb{E}_{\mcal{K}}\left[\sigma\left(\tau-\frac{1}{\gamma}\sum \ell_{T}(\theta, d_{1})+c_{q}\right)\right],\]
where $\tau:=- \log\left(\int_{z}\exp(-\frac{1}{\gamma}\sum \ell_{T}(z, d_{1}))p_{\mcal{K}}(z) dz\right)$, $c_{q}:=\log(q/(1-q))$ and $\sigma(\cdot)$ is the sigmoid function.
\end{theorem}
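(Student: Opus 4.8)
The plan is to specialize Theorem~\ref{lma:optimal_mi} by plugging the assumed posterior form of Eq~(\ref{eq:assumption}) into the likelihood ratio and simplifying; the argument is essentially a change of normalization. First I would write $\alpha:=\mathbb{P}(\theta|m_{1}=1,T(d_{1}),\mcal{K})$ and $\beta:=\mathbb{P}(\theta|m_{1}=0,T(d_{1}),\mcal{K})$ explicitly using Eq~(\ref{eq:assumption}). The key structural observation is that conditioning on $(m_{1},T(d_{1}),\mcal{K})$ fixes every membership bit $m_{i}$ and every augmented set $T(d_{i})$, hence fixes $L(\theta)=\sum_{i=1}^{n}m_{i}\sum\ell_{T}(\theta,d_{i})$, and the \emph{only} difference between the $m_{1}=1$ and $m_{1}=0$ cases is the presence or absence of the term $\sum\ell_{T}(\theta,d_{1})$ in the exponent. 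Writing $Z_{\beta}:=\int_{z}\exp(-\frac{1}{\gamma}\sum_{i=2}^{n}m_{i}\sum\ell_{T}(z,d_{i}))\,dz$ for the partition function of the $m_{1}=0$ case, one has $\exp(-\frac{1}{\gamma}\sum_{i=2}^{n}m_{i}\sum\ell_{T}(\theta,d_{i}))=\beta Z_{\beta}$, while the partition function of the $m_{1}=1$ case factors as $Z_{\alpha}=Z_{\beta}\int_{z}\exp(-\frac{1}{\gamma}\sum\ell_{T}(z,d_{1}))\,p_{\mcal{K}}(z)\,dz$, using precisely the definition of $p_{\mcal{K}}$ given before the theorem.

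Substituting these expressions, the $\mcal{K}$-dependent constant $Z_{\beta}$ cancels and I obtain
\[\frac{\alpha}{\beta}=\frac{\exp\!\left(-\frac{1}{\gamma}\sum\ell_{T}(\theta,d_{1})\right)}{\int_{z}\exp\!\left(-\frac{1}{\gamma}\sum\ell_{T}(z,d_{1})\right)p_{\mcal{K}}(z)\,dz},\]
so that $\log(\alpha/\beta)=-\frac{1}{\gamma}\sum\ell_{T}(\theta,d_{1})+\tau$ with $\tau$ exactly as defined in the statement. Feeding this identity, together with $c_{q}=\log(q/(1-q))$, into the formula of Theorem~\ref{lma:optimal_mi} immediately yields $\mathbb{P}(m_{1}=1|\theta,T(d_{1}))=\mathbb{E}_{\mcal{K}}\left[\sigma\!\left(\tau-\frac{1}{\gamma}\sum\ell_{T}(\theta,d_{1})+c_{q}\right)\right]$, where I would emphasize that $\tau$ depends on $\mcal{K}$ through $p_{\mcal{K}}$ and therefore correctly stays inside the outer expectation $\mathbb{E}_{\mcal{K}}[\cdot]$ rather than coming out as a constant.

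I expect the only real care needed is the bookkeeping of normalizing constants: verifying that the partition function of the $m_{1}=0$ posterior divides out cleanly from $\alpha/\beta$ so that no residual $\mcal{K}$-dependent factor survives outside the integral defining $\tau$, and checking that the integrals defining $\tau$ and $p_{\mcal{K}}$ are finite (which follows from $L(\theta)\ge 0$ together with the implicit integrability of the energy-based posterior in Eq~(\ref{eq:assumption})). No probabilistic input beyond Theorem~\ref{lma:optimal_mi} and the posterior assumption is required, so once the constants are tracked correctly the conclusion is essentially a one-line substitution.
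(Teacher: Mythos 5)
Your proposal is correct and follows essentially the same route as the paper's own proof: you plug the energy-based posterior into the likelihood ratio $\alpha/\beta$ from Theorem~\ref{lma:optimal_mi}, cancel the $m_{1}=0$ partition function via $p_{\mcal{K}}$, and read off $\log(\alpha/\beta)=-\frac{1}{\gamma}\sum\ell_{T}(\theta,d_{1})+\tau$. Your explicit remark that $\tau$ depends on $\mcal{K}$ and must remain inside $\mathbb{E}_{\mcal{K}}[\cdot]$ is a point the paper leaves implicit, but the argument is otherwise identical.
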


\begin{proof}
 For the $\alpha$ and $\beta$ defined in Eq~(\ref{eq:pro_lma1_defineab}), we have
\begin{equation}
\begin{aligned}
\label{eq:thm0}
\alpha &= \frac{e^{-(1/\gamma)\sum \ell_{T}(\theta,d_{1})}e^{-(1/\gamma)\sum_{i=2}^{n}m_{i}\sum \ell_{T}(\theta,d_{i})}}{\int_{z}e^{-(1/\gamma)\sum \ell_{T}(z,d_{1})}e^{-(1/\gamma)\sum_{i=2}^{n}m_{i}\sum \ell_{T}(z,d_{i})}dz}\\
& =\frac{e^{-(1/\gamma)\sum \ell_{T}(\theta,d_{1})} p_{\mcal{K}}(\theta)}{\int_{z}e^{-(1/\gamma)\sum \ell_{T}(z,d_{1})}p_{\mcal{K}}(z) dz}
\end{aligned}
\end{equation}
and $\beta=p_{\mcal{K}}(\theta)$. Therefore, we have $\log(\frac{\alpha}{\beta}) =$
\begin{equation}
\begin{aligned}
\label{eq:thm1}
 -\frac{1}{\gamma}\sum  \ell_{T}(\theta,d_{1}) - \log\left(\int_{z}e^{-(1/\gamma)\sum \ell_{T}(z,d_{1})}p_{\mcal{K}}(z) dz\right).
\end{aligned}
\end{equation}
Then plugging  Eq~(\ref{eq:thm1}) into Theorem~\ref{lma:optimal_mi} yields Theorem~\ref{thm:opt_bayesian}.
\end{proof}

The $\tau$ in Theorem~\ref{thm:opt_bayesian} represents the magnitude of  $\ell_{T}(d_{1})$ on parameters trained without $T(d_{1})$. Smaller $\sum \ell_{T}(\theta,d_{1})$  indicates higher $\mathbb{P}(m_{1}=1)$.  This motivates us to design a membership inference algorithm based on a threshold on loss values (see Algorithm \ref{alg:mean}). Data points with loss values smaller than such a threshold are more likely to be training data.

A second observation is that the optimal membership inference \emph{explicitly} depends on the set of loss values. Therefore, membership inference attacks against the model trained with data augmentation are ought to leverage the loss values of all augmented instances for a given sample. We give more empirical evidence in Section~\ref{subsec:mi_mean}.

\subsection{Inference Algorithm in Practice}
\label{subsec:mi_mean}

 Inspired by Theorem~\ref{thm:opt_bayesian},  we predict the membership by comparing $\frac{1}{k}\sum \ell_{T}(\theta,d_{i})$  with a given threshold. The pseudocode is presented in Algorithm~\ref{alg:mean}.

\begin{algorithm}
	\caption{Membership inference with average loss values ($M_{mean}$).}
	\label{alg:mean}
	\SetKwInOut{Input}{Input}
	
	\SetKwInOut{Output}{Output}

	\Input{Set of loss values $\ell_{T}(\theta,d )$,  threshold $\tau$.}
	\Output{Boolean value, $true$ denotes $d $ is a member.}
	Compute $v=mean(\ell_{T})$.

	
	
	Return $v<\tau$.

\end{algorithm}

 We can set threshold $\tau$ in Algorithm~\ref{alg:mean} based on the outputs of shadow models or tune it based on validation data as done in previous work \cite{sablayrolles2019white,song2019privacy}. Though simple, Algorithm~\ref{alg:mean}  significantly outperforms $M_{loss}$ by a large margin. The experiment results can be found in Section~\ref{sec:exp}. 
 
 We now give some empirical evidence on why $M_{mean}$ is better than $M_{loss}$.  We plot the bar chart of single loss values in Figure~\ref{fig:loss_hist} (we random sample one loss value for each example). We train the ResNet110 model \cite{he2016deep} to fit CIFAR10 dataset\footnote{\url{https://www.cs.toronto.edu/~kriz/cifar.html}.}.  We use the same transformation pool $\mcal{T}$ as \citet{he2016deep} which contains horizontal flipping and random clipping. As shown in Figure~\ref{fig:loss_hist},  the overlap area of the loss values between the training samples and the test samples is large when data augmentation is used. For the value inside the overlap area, it is impossible for $M_{loss}$ to classify its membership confidently. Therefore, the overlap area sets up a limit on  the success rate of $M_{loss}$.

 Next, we plot the distribution of $\frac{1}{k}\sum \ell_{T}(\theta,d_{i})$ in Figure~\ref{fig:mean_std_hist}.  The overlap area in Figure~\ref{fig:mean_std_hist} is significantly smaller compared to Figure~\ref{fig:loss_hist}. This indicates  classifying the mean of $\ell_{T}(\theta,d_{i})$ is easier than classifying a single loss value.

\begin{figure} 
    \centering
  \includegraphics[width=0.5\linewidth]{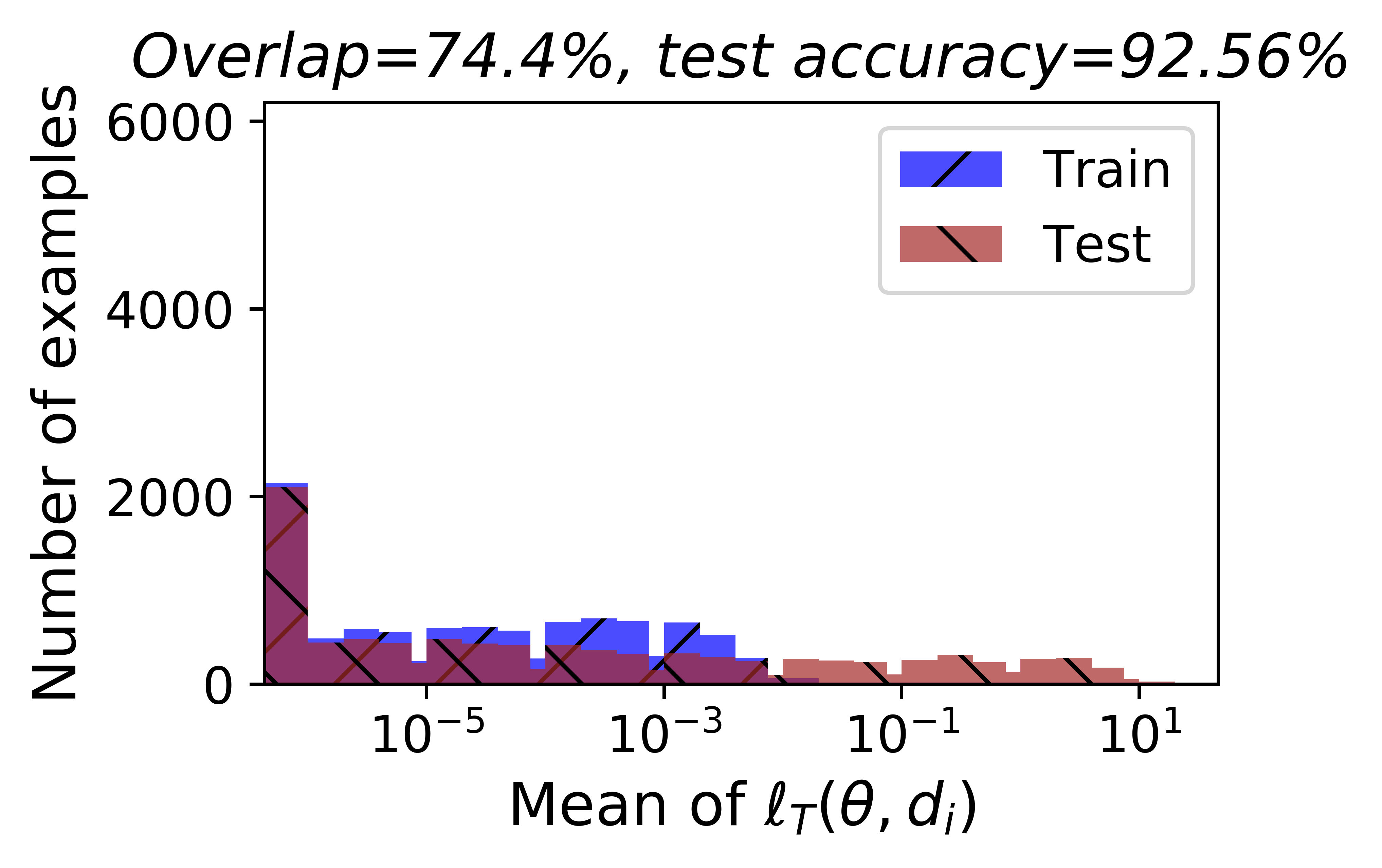}
  \caption{Distribution of the mean  of $\ell_{T}(\theta,d_{i})$. The experiment setting is the same as Figure~\ref{fig:loss_hist}. When using mean  as metric, the overlap area between training and test distributions is smaller than using single loss, which indicates that $\frac{1}{k}\sum \ell_{T}(\theta,d_{i})$ is a better feature.}
  \label{fig:mean_std_hist}
\end{figure}

\section{Membership Inference with Augmented Data Using Neural Network}

\label{sec:mi_nn}

We have shown that the mean of loss values is the optimal membership inference when $\theta$ follows a posterior assumption and demonstrate its good empirical performance. However, if in practice $\theta$  does not exactly follow the posterior assumption, it is possible to design features to incorporate more information than the average of loss values to boost the membership inference success rate.   In this section, we use more features in $\ell_T(\theta, d)$ as input and train a neural network $\mcal{N}$ to do the membership inference. The general algorithm is presented in Algorithm~\ref{alg:NN}.

   \begin{algorithm}
	\caption{Membership inference with neural network.}
	\label{alg:NN}
	\SetKwInOut{Input}{Input}
	
	\SetKwInOut{Output}{Output}

	\Input{Set of loss values of a target  sample $\ell_{T}(\theta,d )$; 
    MI network $\mcal{N}$ and hyperparameters $\mcal{H}$;
    some raw data $\mcal{S}:=\{(\ell_T(\theta, \hat{d}), \mathbf{1}_{\hat{d}\in D_{train}}\}$.}
	\Output{boolean value, $true$ denotes $d $ is a member.}
    
    Build input feature vectors $\vv$ from $\ell_{T}(\theta,\hat{d})$ and construct a training set $\mcal{S}':=\{(\vv, \mathbf{1}_{\hat{d}\in D_{train}})\}$;
    
    
 Use the training set $\mcal{S}'$ and hyperparameters $\mcal{H}$ to train MI network  $\mcal{N}$;

	Return $\mcal{N}(\ell_{T}(\theta,d ))$.

\end{algorithm}

In Algorithm~\ref{alg:NN}, each record in  raw data $\mcal{S}$ consists of the loss values of a given example and corresponding membership. The training data of MI network is built from $\mcal{S}$. Specifically, the loss values of each record are transformed into the input feature vector  of MI network $\mcal{N}$.

 Then  the key point is to design input feature of the network $\mcal{N}$. We first  use the raw values in $\ell_{T}(\theta,d )$ as features. We show this  solution has poor performance because it is not robust to the permutation on loss values. Then we design permutation invariant features through the  \emph{raw moments} of $\ell_{T}(\theta,d )$ and demonstrate its superior performance.

\subsection{A Bad Solution}

A straightforward implementation is to train a neural network as a classifier whose inputs are the loss values of all the augmented instances for a target sample. The pseudocode of this implementation is presented in Algorithm~\ref{alg:losses}. We refer to this approach as $M_{NN\_loss}$.  Surprisingly, the  success rate of $M_{NN\_loss}$ is much worse than $M_{mean}$  though $M_{NN\_loss}$  has access to more information. 

\begin{algorithm}
	\caption{Generating input features from raw losses.}
	\label{alg:losses}
	\SetKwInOut{Input}{Input}
	
	\SetKwInOut{Output}{Output}

	\Input{Set of loss values  $\ell_T(\theta, \hat{d})$.}
	\Output{Feature vector $\vv$.}

    	 Concatenate the elements in $\ell_{T}$ into vector $\vv$. 

	 Return $\vv$.

\end{algorithm}

\begin{figure}
    \centering
  \includegraphics[width=0.5\linewidth]{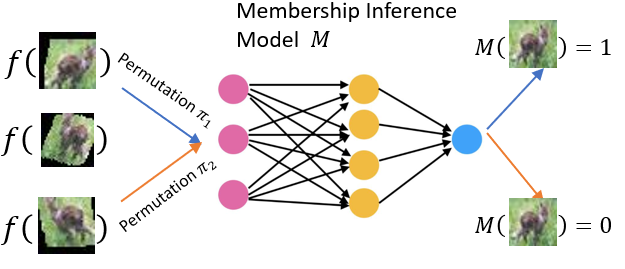}
  \caption{Neural network is not robust to permutation of input features. Changing the order of features will change the prediction. However, the order of augmented instances is not relevant to the membership. }
  \label{fig:per_noinv}
\end{figure}

We note that different from standard classification task, the order of elements in set $\ell_{T}(\theta,d )=\{\ell(\theta,\tilde d);\tilde d\in T(d )\}$ should not  affect the decision of the MI classifier because of the nature of the problem.  However, the usual neural network is not invariant  to the permutation on input features. For a neuron with non-trivial weights, changing the positions of input features would change its output. We illustrate this phenomenon in Figure~\ref{fig:per_noinv}: the order of elements in $\ell_{T}$, which is not relevant to the target sample's membership, however has large influence on the output of network.

\subsection{Building Permutation Invariant Features }

Inspired by the failure of $M_{NN\_loss}$, we design features that are invariant to the permutation on $\ell_{T}(\theta,d)$. We first define functions whose outputs are permutation invariant with respect to their inputs. Then we use permutation invariant functions to encode the loss values into  permutation invariant features.

Recall  $k=|T|$ is the number of augmented instances for each sample.  Let $a\in\mathbb{R}^{k}$ be a vector version of $\ell_{T}(\theta,d)$. Let $\pi\in\Pi$ be a permutation of $a$ and $P_{\pi}\in \mathbb{R}^{k\times k}$ be its corresponding permutation matrix. The following definition states a transformation function satisfying the permutation invariant property.

\begin{definition}
\label{def:invariant}
A function $f:\mathbb{R}^{k}\rightarrow\mathbb{R}^{p}$ is permutation invariant if for  arbitrary $\pi_{i},\pi_{j}\in\Pi$ and $a\in\mathbb{R}^{k}$:
\[f(P_{\pi_{i}}a)=f(P_{\pi_{j}}a).\]
\end{definition}

Clearly, the \emph{mean} function in  Algorithm~\ref{alg:mean} satisfies Definition~\ref{def:invariant}. However, using the $mean$ to encode $\ell_{T}(\theta,d)$ may introduce too much information loss.

To better preserve the information, we turn to the raw moments of $\ell_{T}(\theta,d)$. The $i_{th}$ raw moment $v_{i}$ of a probability density (mass) function $p(z)$ can be computed as   $v_{i}=\int_{-\infty}^{+\infty}z^{i}p(z)dz$.
The moments of $\ell_{T}(\theta,d)$ can be computed easily because $\ell_{T}(\theta,d)$ is a valid empirical distribution with uniform probability mass. Shuffling the loss values would not change the moments. More importantly, for probability distributions in bounded intervals, the moments of all orders  uniquely determines the distribution (known as \emph{Hausdorff moment problem \cite{shohat1943problem}}). The pseudocode of generating permutation invariant features through raw moments  is in Algorithm~\ref{alg:features}.

\begin{algorithm}
	\caption{Generating permutation invariant  features through raw moments.}
	\label{alg:features}
	\SetKwInOut{Input}{Input}
	
	\SetKwInOut{Output}{Output}

	\Input{Set of loss values $\ell_{T}(\theta,d)$; the highest order of moments $m$.}
	\Output{Permutation invariant features $\vv$}

	 \For{$i\in [m]$}{
     	 Compute the normalized $i_{th}$ raw moment: $v_{i}:=\left(\frac{1}{|T|}\sum_{l\in \ell_{T}(\theta,d)}l^{i}\right)^{1/i}$,}
	 
	 
     Concatenate $\{v_i; i\in[m]\}$ into a vector $\vv$. 
     
	 Return $\vv$.

\end{algorithm}

We note that any classifier using the features generated by Algorithm~\ref{alg:features} is permutation invariant with respect to $\ell_{T}(\theta,d)$.  We then use Algorithm~\ref{alg:features} to construct $\mcal{S}^{'}$ in Algorithm~\ref{alg:NN}. This approach is referred to as $M_{moments}$. In our experiments, $M_{moments}$ achieves the highest inference success rate. Experiments details and results can be found in Section~\ref{sec:exp}.

 \begin{figure}
    \centering
  \includegraphics[width=1.0\linewidth]{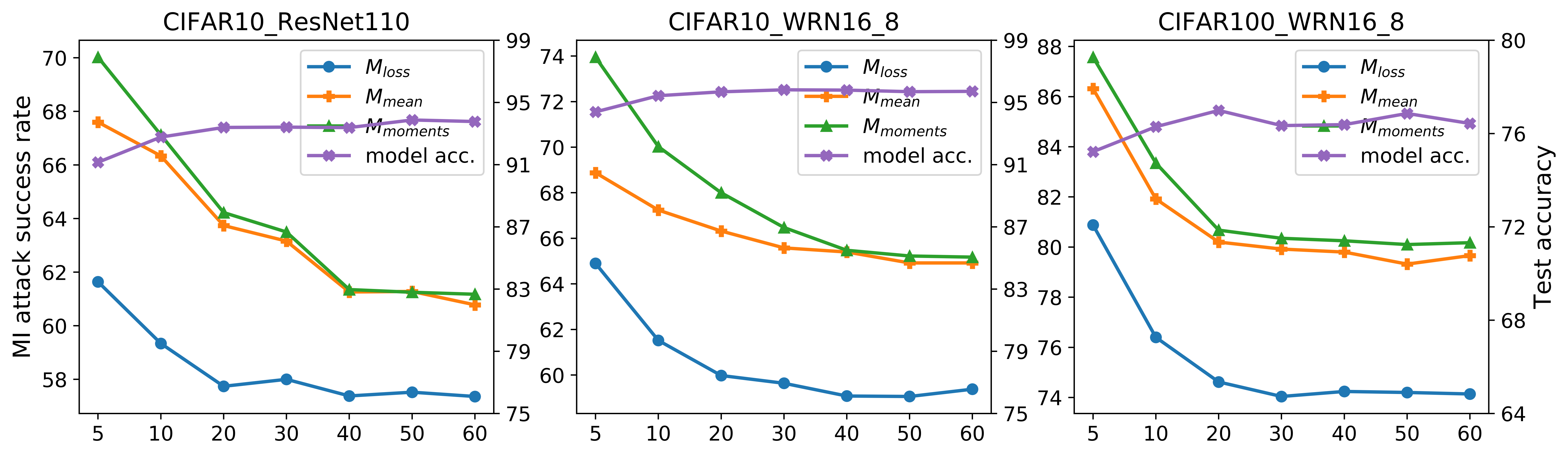}
  \caption{Membership inference success rates with varying $k$ on CIFAR10 and CIFAR100. The left y-axis denotes the membership inference attack success rate. The right y-axis denotes the test accuracy of target models. Our algorithms achieve universally better performance on different datasets and models with varying choices of $k$. }
  \label{fig:varying_N}
\end{figure}

\section{Experiments}

\label{sec:exp}

In this section, we empirically compare the proposed inference algorithms with state-of-the-art membership inference attack algorithm, and demonstrate that the proposed algorithms achieve superior performance over different datasets, models and choices of data augmentation. 
\begin{table*}
\small
\renewcommand{\arraystretch}{1.}
\centering
    \caption{Membership inference success rates (in $\%$). We report top-1 test accuracy for CIFAR10 and top-5 accuracy for ImageNet.  The numbers under  algorithm name are the attack success rates. When $k=0$, we run the proposed methods with 10 randomly augmented instances as input anyway. The baseline attack $M_{loss}$ is introduced in Section~\ref{sec:pre}. The row with $k=0$ denotes the model is trained without data augmentation. Test accuracy denotes the target model's classification accuracy on test set.  }
    \label{tbl:brief_attack_success_rates}
        \begin{tabular}{llllllll}
        \hline
        \hline
            Model                           &  Dataset  &     $|T|$                        &  Test accuracy            & $M_{loss}$ & $M_{NN\_loss}$ &  $M_{mean}$  & $M_{moments}$\\
            \hline
            \multirow{2}{*}{2-layer ConvNet}& CIFAR10  &   \multirow{1}{*}{$k=0$}& 59.7     & 83.7     & 83.6     &   83.7     &  83.7 \\\cline{2-8}
                                            & CIFAR10   &\multirow{1}{*}{$ k=3$} & 64.6    &  82.2    & 85.7     &  90.3      & \textbf{91.3}       \\\hline

            \multirow{2}{*}{ResNet110}      & CIFAR10   & \multirow{1}{*}{$k=0$}&  84.9          & 65.4      & 65.4      &   65.4        &  65.6 \\\cline{2-8}

                                        & CIFAR10  &\multirow{1}{*}{$ k=10$} & 92.7           &  58.8    & 61.8    &  66.3       & \textbf{67.1}      \\\hline
            \multirow{2}{*}{WRN16-8}    &  CIFAR10  &\multirow{1}{*}{$k=0$}&  89.7          & 62.9      & 62.8      &   62.8        &  62.9 \\\cline{2-8}

                                        &  CIFAR10  &  \multirow{1}{*}{$ k=10$} & 95.2       &  61.9     & 63.1   &  68.9      & \textbf{70.1}      \\\hline
            \multirow{1}{*}{ResNet101}    
                                        &  ImageNet  &  \multirow{1}{*}{$ k=10$} & 93.9       &  68.3     & 68.9   &  73.9      & \textbf{75.2}      \\\hline
                                        \hline
        \end{tabular}

\end{table*}

We first introduce the datasets and target models with the details of experiment setup. Our source code is publicly available \footnote{\url{https://github.com/dayu11/MI_with_DA}}. 

\subsubsection{Datasets} We use benchmark datasets for image classification: CIFAR10, CIFAR100, and ImageNet1000. CIFAR10 and CIFAR100 both have 60000 examples including 50000 training samples and 10000 test samples. CIFAR10 and CIFAR100 have 10 and 100 classes, respectively. ImageNet1000 contains more than one million high-resolution images with 1000 classes. We use the training and validation sets provided by ILSVRC2012\footnote{\url{http://image-net.org/challenges/LSVRC/2012/}.}.

\subsubsection{Details of used data augmentation}  We consider $6$ standard transformations in image processing literature, including flipping, cropping, rotation, translation, shearing, and cutout \citet{devries2017improved}.

For each $t\in\mcal{T}$, the operations are applied with a random order and  each operation is conducted with a randomly chosen parameter (e.g. random rotation degrees). Following the common practice, we sample different  transformations for different training samples. 

\subsubsection{Target models} We choose target models with varying capacity, including a small convolution model used in previous work \cite{shokri2017membership,sablayrolles2019white}, deep ResNet \cite{he2016deep} and wide ResNet \cite{zagoruyko2016wide}.  The small convolution model  contains $2$ convolution layers with $64$ kernels, a global pooling layer and a fully connected layer of size $128$. The small model is trained for $200$ epochs with initial learning rate 0.01. We decay the learning rate by $10$ at the 100-th epoch. Following \citet{shokri2017membership, sablayrolles2019white}, we  randomly choose $15000$ samples as training set for the small model. The ResNet models for CIFAR is a  deep ResNet model with 110 layers and  a wide ResNet model WRN16-8. The detailed configurations and training recipes for deep/wide ResNets can be found in the original papers.  For ImageNet1000, we use the ResNet101 model and follow the training recipe in \citet{sablayrolles2019white}.

\subsubsection{Implementation details of membership inference algorithms}

All the augmented instances are randomly generated. We use $k$ to denote the number of augmented instances for one image. The number of augmented images is the same for training target models and conducting membership inference attacks. The benchmark algorithm is $M_{loss}$, which achieves the state-of-the-art black-box membership inference success rate \cite{sablayrolles2019white}.   For  $M_{loss}$, \emph{we report the best result among using every element in $\ell_{T}(\theta,d)$ and the loss of original image}. We tune the threshold of  $M_{loss}$ and $M_{mean}$ on valid data following previous work \cite{sablayrolles2019white, song2019privacy}.  For $M_{NN\_loss}$ and $M_{moments}$, we use $200$ samples from the training set of target model and $200$ samples from the test set to build the training data of inference network.   The inference network has two hidden layers with $20$ neurons and Tanh non-linearity as activation function.   We randomly choose $2500$ samples from the training set of target model and $2500$ samples from the test set to evaluate the inference success rate. The samples used to evaluate inference success rate have no overlap with inference model's training data.  Other details of implementation can be found in our submitted code.

\subsubsection{Experiment Results}

We first present the inference success rate with a single $k$.  We use $k=10$ as default. For 2-layer ConvNet, we choose $k=3$ because its small capacity.  The results are presented in Table~\ref{tbl:brief_attack_success_rates}.

When data augmentation is used, algorithms using  $\ell_{T}(\theta,d)$ universally outperform $M_{loss}$. Algorithm~\ref{alg:losses} has inferior inference success rate compared to $M_{mean}$ and $M_{moments}$ because it is not robust to permutation on input features. The best inference success rate is achieved by $M_{moments}$, which utilizes the most information while being invariant to the permutation on $\ell_{T}(\theta,d)$.

Remarkably, when $k=10$, $M_{moments}$ has inference success rate higher than $70\%$ against WRN16-8, whose top-1 test accuracy on CIFAR10 is more than $95\%$!  Moreover, in Table~\ref{tbl:brief_attack_success_rates}, our algorithm on models trained with data augmentation obtains higher inference success rate than previous algorithm ($M_{loss}$) on models trained without data augmentation. We note that the generalization gap of models with data augmentation is much smaller than that of models without data augmentation. \emph{This observation challenges the common belief that models with better generalization provides better privacy. }

We further plot the inference success rates of $M_{loss}$, $M_{mean}$ and $M_{moments}$ with varying $k$ in Figure~\ref{fig:varying_N}.  For all algorithms, the inference success rate gradually degenerates as $k$ becomes large. Nonetheless, our algorithms  consistently outperform $M_{loss}$ by a large margin for all  $k$.

\section{Connection with Differential Privacy}
\label{sec:dp}

Differential privacy (DP) measures how a single data point affects the parameter posterior in the worst case. In this section, we show an algorithm with DP guarantee can provide an upper bound on the membership inference. DP is  defined for a random algorithm $\mcal{A}$ applying on two datasets $D$ and  $D'$ that differ from each other in one sample, denoted as $D\sim^{1}D'$. Differential privacy ensures the change of arbitrary instance does not significantly change the algorithm's output.

\begin{definition} ($\epsilon$ - differential privacy \cite{dwork2006calibrating})
A randomized learning algorithm $\mcal{A}$ is $\epsilon$-differentially private with respect to $D$ if for any subset of possible outcome $S$ we have $\max_{D\sim^{1}D^{'}} \frac{\mathbb{P}(\mcal{A}=S|D)}{\mathbb{P}(\mcal{A}=S|D^{'})}\leq e^{\epsilon}.$
\end{definition}

However, in the formula of Theorem~\ref{lma:optimal_mi}, the change/removal of one sample $d_1$ indicates change/removal of a  set of training instances $T(d_1)$.  We need \emph{group differential privacy} to give upper bound on the quantity of Theorem~\ref{lma:optimal_mi}.

Let $D$ be a training set with $n$ samples and $D\sim^{k}D'$ denote that two datasets differ in $k$ instances. Group different privacy and differential privacy are connected via the following property.

\begin{remark} (Group differential privacy) \label{rmk:gdp}
 If $\mathcal{A}$ is $\epsilon$-differentially private with respect to $D$, then it is also $k\epsilon$-group differentially private  for the group size $k$.
\end{remark}

  Let $D_{aug}=\{T(d_{i});m_{i}=1,i\in [n]\}$ be the augmented training set with $k$ transformations, i.e., $|T(d_i)| = k $. For mean query based algorithms (e.g. gradient descent algorithm), the sensitivity of any instance is reduced to $\frac{1}{k}$. Therefore, a learning algorithm $\mathcal{A}$ that is $\epsilon$-differentially private with respect to dataset $D$ is $\frac{\epsilon}{k}$-differentially private with respect to $D_{aug}$\footnote{The $\frac{\epsilon}{k}$-DP is at instance level, i.e. $D_{aug}\sim^1 D_{aug}'$.}.  With this observation, we have an upper bound on the optimal membership inference in Theorem~\ref{lma:optimal_mi}. 

\begin{proposition}
\label{pro:upper_bound}
If the learning algorithm is $\frac{\epsilon}{k}$-differentially private with respect to $D_{aug}$, we have
\[\mathbb{P}(m_{1}=1|\theta,T(d_{1}))\leq \sigma\left(\epsilon+\log(q/(1-q))\right).\]
\end{proposition}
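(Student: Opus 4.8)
The plan is to combine the Bayesian-optimal expression of Theorem~\ref{lma:optimal_mi} with the group differential privacy amplification of Remark~\ref{rmk:gdp}. The first step is to identify the two datasets being compared inside the expectation. Fix a realization of $\mathcal{K}=\{m_2,\dots,m_n,T(d_2),\dots,T(d_n)\}$. Then the training set underlying $\mathbb{P}(\theta\mid m_1=1,T(d_1),\mathcal{K})$ is exactly the training set underlying $\mathbb{P}(\theta\mid m_1=0,T(d_1),\mathcal{K})$ with the $k$-element block $T(d_1)$ adjoined; that is, these two conditional laws of $\theta$ are the output distributions of the learning algorithm on two instances of $D_{aug}$ that differ precisely by the group $T(d_1)$ of size $k$.

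Next I would invoke the privacy hypothesis. The algorithm is assumed $\tfrac{\epsilon}{k}$-differentially private at the instance level with respect to $D_{aug}$ (this is the version that already incorporates the $\tfrac1k$ sensitivity reduction discussed just before the proposition). By Remark~\ref{rmk:gdp} with group size $k$, it is therefore $k\cdot\tfrac{\epsilon}{k}=\epsilon$-differentially private against a change of the whole block $T(d_1)$. Rewriting the $\epsilon$-DP inequality in terms of output densities (the standard passage from event probabilities to likelihood ratios) gives, for $\theta$-almost every value,
\[
\frac{\alpha}{\beta}=\frac{\mathbb{P}(\theta\mid m_1=1,T(d_1),\mathcal{K})}{\mathbb{P}(\theta\mid m_1=0,T(d_1),\mathcal{K})}\le e^{\epsilon},
\]
with $\alpha,\beta$ as in Eq.~(\ref{eq:pro_lma1_defineab}), hence $\log(\alpha/\beta)\le\epsilon$ for every realization of $\mathcal{K}$.

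Finally I would substitute this into Theorem~\ref{lma:optimal_mi}. Since $\sigma$ is monotonically increasing, for every $\mathcal{K}$ we have
\[
\sigma\!\left(\log\frac{\alpha}{\beta}+\log\frac{q}{1-q}\right)\le\sigma\!\left(\epsilon+\log\frac{q}{1-q}\right),
\]
and the right-hand side is a constant, so taking $\mathbb{E}_{\mathcal{K}}$ of the left-hand side (which equals $\mathbb{P}(m_1=1\mid\theta,T(d_1))$ by Theorem~\ref{lma:optimal_mi}) preserves the bound, yielding the claim. The only delicate point is the first step — making rigorous that conditioning on $\mathcal{K}$ reduces the two posteriors of $\theta$ to the outputs of the learner on datasets differing by exactly the group $T(d_1)$, and that the $\tfrac{\epsilon}{k}$-DP hypothesis is the right object to feed into the $k$-fold group amplification; the remainder is just monotonicity of $\sigma$ and linearity of expectation.
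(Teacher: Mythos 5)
Your proposal is correct and follows essentially the same route as the paper's own proof: bound the likelihood ratio $\alpha/\beta$ by $e^{\epsilon}$ for each fixed $\mathcal{K}$ via the $k$-fold group amplification of the $\tfrac{\epsilon}{k}$-DP guarantee on $D_{aug}$, then substitute into Theorem~\ref{lma:optimal_mi} using monotonicity of $\sigma$ and of the expectation. Your write-up is in fact more explicit than the paper's about the two delicate points (identifying the neighboring datasets as differing exactly by the block $T(d_1)$, and passing from the event-level DP inequality to a density ratio), which the paper glosses over.
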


\begin{proof}
For any given $\mcal{K}$, we have 
\begin{flalign}
&\frac{\mathbb{P}(\theta|m_{1}=1,T(d_{1}),\mcal{K})}{\mathbb{P}(\theta|m_{1}=0,T(d_{1}),\mcal{K})} \leq \max_{D_{aug}\sim^{k}D^{'}_{aug}}\frac{\mathbb{P}(\mcal{A}=S|D_{aug})}{\mathbb{P}(\mcal{A}=S|D^{'}_{aug})}\nonumber\\
&\leq e^{\epsilon}. \label{eq:upper_bound_dp}
\end{flalign}

The first inequality is due to the definitions of $T(d_1)$ and group differential privacy, and the second inequality is due to the property of group differential privacy (Remark~\ref{rmk:gdp}).  Substituting Eq~(\ref{eq:upper_bound_dp}) into Theorem~\ref{lma:optimal_mi} yields the desired bound.
\end{proof}

Proposition~\ref{pro:upper_bound} tells that if the learning algorithm is $\frac{\epsilon}{k}$-DP with respect to $D_{aug}$ , which is true for differentially private gradient descent \cite{bassily2014private}, the upper bound of the optimal membership inference is not affected by the number of transformations $k$. This is in contrast with previous membership inference algorithm that only considers single instance \cite{sablayrolles2019white}, i.e., formulated as $\mathbb{P}(m_{1}=1|\theta, \tilde d_{1})$, 
where $\tilde d_{1}$ can be any element in $T(d_{1})$. Due to the result in \citet{sablayrolles2019white},  the upper bound of $\mathbb{P}(m_{1}=1|\theta, \tilde d_{1})$ scales with $\frac{\epsilon}{k}$ for mean query based algorithms, which monotonically decreases with $k$. This suggests  the algorithm in \citet{sablayrolles2019white} has limited performance especially when $k$ is large. 


\section{Conclusion}
\label{sec:conclusion}
In this paper, we revisit the influence of data augmentation on the privacy risk of machine learning models. We show the optimal membership inference in this case explicitly depends on the augmented dataset (Theorem~\ref{lma:optimal_mi}). When the posterior distribution of parameters follows the Bayesian posterior, we give an explicit expression of the optimal membership inference (Theorem~\ref{thm:opt_bayesian}). Our theoretical analysis inspires us to design  practical attack algorithms. Our algorithms achieve state-of-the-art membership inference success rates against well-generalized models, suggesting that the privacy risk of existing deep learning models may be largely underestimated.  An important future research direction is to mitigate the privacy risk incurred by data augmentation.

\section*{Acknowledgments}

Da Yu and Jian Yin are supported by the National Natural Science Foundation of China (U1711262, U1711261,U1811264,U1811261,U1911203,U2001211), Guangdong Basic and Applied Basic Research Foundation (2019B1515130001),  Key R\&D Program of Guangdong Province (2018B010107005). Huishuai Zhang and Jian Yin are corresponding authors.

\bibliography{privacy}



\end{document}